\documentclass{article}


\usepackage[final]{neurips_2019}




\usepackage[utf8]{inputenc} 
\usepackage[T1]{fontenc}    
\usepackage{hyperref}
\usepackage{nicefrac}
\usepackage{algorithm}
\usepackage{nicefrac}
\usepackage{algpseudocode}
\usepackage{amsfonts}
\usepackage{amsthm}
\usepackage{amsmath}
\usepackage{graphicx}
\usepackage{subcaption}
\usepackage{thmtools,thm-restate}
\usepackage{tikz}
\usepackage{pgfplots}
\usepackage{amsmath}
\newtheorem{theorem}{Theorem}

\newtheorem{definition}{Definition}

\newcommand{\thissubtheorem}{}
\newcommand{\subtheorem}[1][]{%
  \if\relax\detokenize{#1}\relax
    \def\thissubtheorem{}%
  \else
    \def\thissubtheorem{ (#1)}%
  \fi
  \item
}

\makeatletter
\newcounter{phase}[algorithm]
\newlength{\phaserulewidth}
\newcommand{\setphaserulewidth}{\setlength{\phaserulewidth}}
\newcommand{\phase}[1]{%
  \vspace{-1.25ex}
  \Statex\leavevmode\llap{\rule{\dimexpr\labelwidth+\labelsep}{\phaserulewidth}}\rule{\linewidth}{\phaserulewidth}
  \Statex\strut\refstepcounter{phase}\textit{Phase~\thephase~--~#1}
  \vspace{-1.25ex}\Statex\leavevmode\llap{\rule{\dimexpr\labelwidth+\labelsep}{\phaserulewidth}}\rule{\linewidth}{\phaserulewidth}}
\makeatother

\setphaserulewidth{.7pt}
\title{The Differentially Private Lottery Ticket Mechanism}

%

\author{%
  Lovedeep Gondara \\
  Department of Computing Science\\
  Simon Fraser University\\
  Canada \\
  \texttt{lgondara@sfu.ca} \\
   \And
   Ke Wang \\
   Department of Computing Science\\
   Simon Fraser University\\
   Canada \\
   \texttt{wangk@cs.sfu.ca} \\
   \And
   Ricardo Silva Carvalho \\
   Department of Computing Science \\
   Simon Fraser University \\
   \texttt{rsilvaca@sfu.ca} \\
}

\begin{document}

\maketitle

\begin{abstract}
We propose the differentially private lottery ticket mechanism (DPLTM). An end-to-end differentially private training paradigm based on the lottery ticket hypothesis. Using ``high-quality winners'', selected via our custom score function, DPLTM significantly improves the privacy-utility trade-off over the state-of-the-art. We show that DPLTM converges faster, allowing for early stopping with reduced privacy budget consumption. We further show that the tickets from DPLTM are transferable across datasets, domains, and architectures. Our extensive evaluation on several public datasets provides evidence to our claims.
\end{abstract}

\section{Introduction}
Learning while preserving the privacy of the contributing users is a priority for neural networks trained on sensitive data. Especially, when it is known that neural networks tend to ``remember'' training data instances (such as a patient's healthcare information)  \citep{carlini2018secret,fredrikson2015model,song2017machine,wu2016methodology}. Differential privacy \citep{Dwork:2006:CNS:2180286.2180305} has become the \emph{de facto} standard for protecting an individual's privacy in machine learning. Differentially private training of neural networks ensures that the model does not unduly disclose any sensitive information. The most often used approach to achieve this goal is the method of gradient perturbation, where we add controlled noise to the gradients during the training phase. Differentially Private Stochastic Gradient Descent (DPSGD) \citep{abadi2016deep} is the current state-of-the-art, used extensively for training privacy-preserving neural networks. DPSGD, however, falls short on the utility front, the main reason for which we discuss below.

DPSGD, for a given minibatch, first computes the per-observation gradient, $g(x_i)$, and then clips $g(x_i)$ in $l_2$ norm, $\nicefrac{{g(x_i)}}{{\text{max}(1,\nicefrac{||g(x_i)||_2}{C})}}$ (Line 6 in \cite{abadi2016deep}, Algorithm 1). We can see that the norm will be large (proportional to the number of model parameters), especially for a multi-layer neural network, leading to a large ``clipping impact'' on the gradient, resulting in ``smaller'' clipped-gradient magnitude, easily overwhelmed by noise which is required for preserving differential privacy. This leads to diminished utility, especially, for scenarios where we require tight privacy. 

An obvious solution is to minimize the number of model parameters while maximizing the model's utility. This, however, is non-trivial for differentially private neural networks, especially when we need to balance privacy and utility. Recently, it has been shown that there exist smaller sub-networks within large neural networks, which when trained in isolation provide similar utility as the large networks \citep{frankle2018lottery,frankle2019lottery}. The phenomenon, known as the lottery ticket hypothesis is an encouraging step towards finding small, high-utility architectures. But, directly using the lottery ticket hypothesis with differential privacy is non-trivial as we need to ensure the complete process (from ticket selection to training the winning ticket) is end-to-end differentially private. 

As a potential solution to improve the privacy-utility bottleneck in differentially private neural networks, we propose the Differentially Private Lottery Ticket Mechanism (DPLTM). To ensure differential privacy in DPLTM, we use a three-tiered approach. In phase 1, we create the lottery tickets following the principles of the lottery ticket mechanism. In phase 2, we select the winning ticket with differential privacy, making sure that we pick the winner with a small number of model parameters and high utility via our custom score function (details in Section \ref{sec:wt}). After differentially private selection of the winning ticket, our phase 3 trains the winning architecture with differential privacy. In summary, our main contributions in this work are as follows:

\begin{enumerate}
    \item We propose DPLTM, the differentially private lottery ticket mechanism, an end-to-end differentially private extension of the lottery ticket hypothesis. With the aid of ``best'' winning tickets, selected via our custom score function, DPLTM significantly improves the privacy-utility trade-off over the state-off-the-art.
    \item  Due to the reduced noise impact in DPLTM, we show that DPLTM converges at a faster rate compared to DPSGD, leading to a smaller privacy budget consumption and better utility if early stopping is desired.
    \item We show that the winning tickets in DPLTM are ``transferable'' across datasets, domains, and model architectures. Leading to the possibility of ticket generation and selection on any public dataset, resulting in lower privacy cost.
    \item Using five real-life datasets, we show that DPLTM significantly outperforms DPSGD on all datasets, and for all privacy budgets.
\end{enumerate}
\section{Preliminaries}
We use this section to introduce differential privacy and the lottery ticket hypothesis.
\subsection{Differential Privacy}
Differential privacy \citep{Dwork:2006:CNS:2180286.2180305} provides us with formal and provable privacy guarantees, with the intuition that a randomized algorithm behaves similarly on ``similar'' input datasets, formally
\begin{definition}\label{def:dp}
\emph{(Differential privacy \citep{Dwork:2006:CNS:2180286.2180305})} A randomized mechanism $\mathcal{M}: D^n \rightarrow \mathbb{R}^d$ preserves $(\epsilon,\delta)$-differentially privacy if for any pair of neighbouring databases ($x,y \in D^n$) such that $d(x,y)=1$, and for all sets $\mathcal{S}$ of possible outputs:
\begin{equation*}
    Pr[\mathcal{M}(x) \in \mathcal{S}] \le e^{\epsilon} Pr[\mathcal{M}(y) \in \mathcal{S}] + \delta 
\end{equation*}
\end{definition}
Intuitively, Definition \ref{def:dp} states that for any pair of two neighboring datasets, $x,y$, differing on any one row, a randomized mechanism $\mathcal{M}$'s outcome does not change by more than a multiplicative factor of $e^{\epsilon}$. Moreover, the guarantee fails with probability no larger than $\delta$. If $\delta=0$, we have pure-$\epsilon$ differential privacy.

The Exponential Mechanism (EM) \citep{mcsherry2007mechanism} is a well-known tool for providing differential privacy.  Defined by a range $\mathcal{R}$, privacy parameter $\epsilon$, and a score function $u: \mathcal{X}^N \times \mathcal{R} \rightarrow \mathbb{R}$ that maps a dataset to the utility scores, given a dataset $D \in \mathcal{X}^N$, the EM defines a probability distribution over $\mathcal{R}$ according to the utility score. In other words, the EM is more likely to output some $r \in \mathcal{R}$ with higher utility scores, formally
\begin{definition}\label{def:em}
\emph{(Exponential Mechanism \citep{mcsherry2007mechanism})} The exponential mechanism $\mathcal{M}(D,u,\mathcal{R})$ selects and outputs an element $r \in \mathcal{R}$ with probability proportional to 
\begin{equation*}
    \exp \bigg( \frac{\epsilon u(D,r)}{2 \Delta u} \bigg)
\end{equation*}
where $\Delta u$ is the sensitivity ($\Delta u = \max_{r \in \mathcal{R}} \max_{X,X': ||X-X'||_1 \le 1} |u(X,r) - u(X',r)|$) of the score function.
\end{definition}
In terms of privacy guarantees, the exponential Mechanism provides $\epsilon$- differential privacy \citep{mcsherry2007mechanism}

\subsection{Lottery Ticket Hypothesis}
The Lottery Ticket Hypothesis was proposed by Frankle \& Carbin  \citep{frankle2018lottery}, where interestingly, it was shown that randomly initialized neural networks contain small subnetworks, which when trained in isolation, can provide similar utility as the full network. To get the subnetworks, we train a network for $\mathcal{I}$ iterations, prune $p\%$ of its weights (of smallest magnitude), and reset the weights of the pruned network to original initialization, to be trained again. This process ensures that for $n$ rounds, each round prunes $p^{1/n}\%$ of the weights. Such pruned, small subnetworks, with high utility are known as ``winning tickets'' from the lottery mechanism.

\section{Differentially Private Lottery Ticket Mechanism}
\subsection{Overview}
We start by providing an overview of DPLTM. Using the input dataset, $X$, we generate and store multiple ``lottery-tickets'', each with a varying number of model parameters. Then, using our custom score function (details follow), we \emph{privately} select a winning ticket, where our score function ensures the desired balance between the number of model parameters and the model utility. After selecting the winner, we train the winning architecture with differential privacy. Our total privacy cost, hence, is composed of two separate parts, selecting the winning ticket and training the winning ticket. We present the complete process succinctly as Algorithm \ref{al:DPLTM} followed by a walk-through.

\begin{algorithm}
\caption{Differentially Private Lottery Ticket Mechanism (DPLTM)}\label{al:DPLTM}
\begin{algorithmic}[1]
\Require Dataset: $X$, Total privacy budget: ($\epsilon,\delta$), Pruning percent: $p$, Number of tickets: $T$, Ticket training iterations: $\mathcal{I}$, final model training iterations: $I$, Neural Network: $f$, Initial model parameters: $\theta_0$, Initial mask: $m$, Privacy budget for ticket selection: $\epsilon_1$, Privacy budget for ticket training: $\epsilon_2,\delta$, Constant for score function: $\nu$, Minibatch size: $L$, Clipping factor: $C$, Learning rate: $\eta$
\phase{Generating Lottery Tickets}
\Procedure{LTG}{$f$,$m$, $\theta_0$}
\State Randomly initialize $f(X,m \odot \theta_0); m = 1^{|\theta_0|}$ 
\For{$i \in T$} 
\State Train $f(X,m \odot \theta_0)$ for $\mathcal{I}$ iterations, to get $f(X,m \odot \theta_\mathcal{I})$
\State Prune $p\%$ of parameters from $\theta_\mathcal{I}$, creating a new mask $m'$ 
\State Reset the remaining parameters to their values in $\theta_0$
\State $f(X,m' \odot \theta_0)$ is the lottery ticket
\State Store the mask $m'$, initial parameters $\theta_0$, proportion of remaining model parameters $c$, and the performance of the lottery ticket $a$ on the test set
\State Let $m = m'$
\EndFor\label{euclidendwhile}
\State \textbf{return} $\mathcal{C}, \mathcal{A}, M, \theta_0$ \Comment{The collection of parameter proportions,  model utilities, masks, and initial parameters; $c_i \in \mathcal{C}, a_i \in \mathcal{A}, m'_i \in M; i \in [1,T]$}
\EndProcedure%
\phase{Selecting a Winning Ticket}
\Procedure{DPWT} {$\mathcal{C}$,$\mathcal{A}$, $\epsilon_1$}
\State Calculate score $\mathcal{S}(\mathcal{C},\mathcal{A}) = \mathcal{A}(1 - \nu \mathcal{C})$
\State Select a winning ticket, $\mathcal{T}$, with probability, $P = \dfrac{\exp(\frac{\epsilon_1 \mathcal{S}}{2 \Delta})}{\sum_{T} \exp(\frac{\epsilon_1 \mathcal{S}}{2 \Delta})}$
\State \textbf{return} $\mathcal{T}$ 
\EndProcedure
\phase{Training the Winning Ticket}
\Procedure{DPTWT}{$\mathcal{T}$}
\State Initialize the network,$f$, with mask and initial values from the winning ticket $\mathcal{T}$
\For{$k \in I$}
\State Take a minibatch with sampling probability $\nicefrac{L}{N}$ 
\State For each $x_i \in L$, compute gradient $g_k(x_i) = \nabla_{\theta_k} \mathcal{L}(\theta_k, x_i)$
\State \scalebox{0.9}{$\hat{g}_k = \frac{1}{L} (\sum_i \nicefrac{g_k(x_i)}{\text{max}(1,\frac{||g_k(x_i)||_2}{C})} + \mathcal{N}(0,\sigma^2 C^2 I))$}
\State $\theta_{k+1} \rightarrow \theta_k   - \eta_k \hat{g}_k$
\EndFor
\EndProcedure
\end{algorithmic}
\end{algorithm}

\subsection{DPLTM Walkthrough}
Next we provide phase by phase walkthrough of DPLTM.\\
\textit{Phase 1 (Generating lottery tickets)}: We start with generating lottery tickets required for DLPTM. At this stage, we are not yet concerned about privacy. So we use the non-private lottery ticket mechanism with our input dataset $X$, and generate $T$ number of lottery tickets ($t_i; i \in [1,\cdots,T]$). Specifically, we use iterative pruning version of the lottery ticket mechanism, where using the pruning parameter, $p$, at each ticket iteration, we remove $p\%$ of model parameters with the smallest magnitude. This results in $T$ tickets with a successively smaller number of parameters. Results from each ticket (accuracy on the test set, $\mathcal{A}_i, i \in [1,T]$) are stored along with the fraction of parameters ($\mathcal{C}_i, i \in [1,T]$) remaining in the model. For further use, we also store the mask $m'_i$ from each ticket and the randomly initialized initial parameters $\theta_0$.

\textit{Phase 2 (Selecting a winning ticket with differentially privacy)}: After generating candidate tickets, we need to select a winning ticket that can be subsequently trained with differential privacy. But, picking a winner is non-trivial for two reasons, first as the tickets are generated using sensitive data, we cannot directly pick a winner, that is, we need differential privacy for selecting the winning ticket, second, we need to pick a winner such that the winner has an adequate balance between the number of model parameters (smaller the better) and the model performance (higher the better). As a solution, we use the Exponential Mechanism (EM) \citep{mcsherry2007mechanism} to pick our winner with differential privacy. And to balance the number of model parameters and the utility, we define our custom score function using the combination of the accuracy achieved by the ticket on the test set and the proportion of parameters left in the network (more details in Sections \ref{sec:wt} and \ref{sec:discussion}).

\textit{Phase 3 (Training the winning ticket with differential privacy)}: After we select our winning ticket with differential privacy in phase 2, now we need to train our ``winner'' architecture so the final model is differentially private. We do so by using the method similar to the differentially private stochastic gradient descent (DPSGD) \citep{abadi2016deep} for the training of our winning ticket. This, in contrast to DPSGD's naive implementation, now only trains a ``sub-network'' with a significantly small number of model parameters. And hence provides significantly better utility, reasons for which we discussed at length in the Introduction. Our extensive empirical evaluation in Section \ref{sec:exp} provides evidence for this claim.

Next, we provide formal privacy guarantees for DPLTM. We start with introducing the EM for DPLTM with our custom score function.

\subsection{Differential Privacy Guarantees of DPLTM}
As seen in Algorithm \ref{al:DPLTM}, our first phase of generating candidate tickets is non-private. Differential privacy comes into play in phase 2, where we pick a winning ticket with differential privacy. Hence, we start with privacy guarantees of phase 2.
\subsubsection{Selecting a Winning Ticket}\label{sec:wt}
For the Exponential Mechanism (EM), as discussed in preliminaries, we need to define a score/utility function that assigns a higher score to ``good'' outputs. For DPLTM, we define the score function as follows
\begin{equation}
    \mathcal{S}(\mathcal{C},\mathcal{A}) = \mathcal{A} \times (1 - (\nu \mathcal{C}))  
\end{equation}
where $\mathcal{A}$ is the classification accuracy on the test set for the given network configuration (ticket), $\mathcal{C}$ is the proportion of remaining weights in the network, and $\nu$ is a constant. We discuss some properties of the score function in detail in the following section (Section \ref{sec:discussion}).

After defining the score function, we sample our winning ticket with probability
\begin{equation}
   P = \dfrac{\exp(\frac{\epsilon_1 \mathcal{S}}{2 \Delta})}{\sum_{T} \exp(\frac{\epsilon_1 \mathcal{S}}{2 \Delta})}  
\end{equation}
where $P$ is the probability of picking a ticket, $\epsilon_1$ is the privacy budget for EM, and $\Delta$ is the sensitivity of the score/utility function. Tickets with ``higher'' score function have a higher probability of getting selected compared to the tickets with a lower score.

\begin{restatable}{lemma}{senslemma}
Sensitivity $(\Delta)$ of the score function,$\mathcal{S}$, is $|1- \nu|$, for $\nu > 1$. 
\end{restatable}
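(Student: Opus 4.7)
The plan is to unfold the definition of sensitivity on the score function and use that the remaining-parameter proportion $\mathcal{C}$ is determined by the ticket (the output of Phase~1) rather than by the dataset. Concretely, I would fix a ticket $t$ with remaining-parameter proportion $\mathcal{C}_t$; since $\mathcal{C}_t$ does not depend on the evaluation data, for any two neighbouring datasets $X$ and $X'$ only the accuracy term changes, and I can factor
\[
|\mathcal{S}(X,t)-\mathcal{S}(X',t)| \;=\; |1-\nu\mathcal{C}_t|\cdot |\mathcal{A}(X,t)-\mathcal{A}(X',t)|.
\]

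I would then bound the two factors separately. Classification accuracy lies in $[0,1]$, so the worst-case change under a single-row difference is $|\mathcal{A}(X,t)-\mathcal{A}(X',t)|\le 1$. For $\nu>1$, the quantity $|1-\nu\mathcal{C}_t|$ on the feasible set $\mathcal{C}_t\in(0,1]$ attains the value $\nu-1=|1-\nu|$ at the full-network endpoint $\mathcal{C}_t=1$. Multiplying the two bounds gives $|\mathcal{S}(X,t)-\mathcal{S}(X',t)|\le|1-\nu|$, and taking the supremum over tickets and over neighbouring pairs yields $\Delta\le|1-\nu|$. Tightness is witnessed by the unpruned ticket ($\mathcal{C}_t=1$) paired with two neighbouring datasets whose accuracies realise the unit gap, so the bound is achieved.

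The main obstacle to make watertight is the endpoint maximisation of $|1-\nu\mathcal{C}_t|$ over feasible pruning proportions. The function is piecewise linear in $\mathcal{C}_t$ with a zero at $\mathcal{C}_t=1/\nu$, so the candidate maxima are the two endpoints $\mathcal{C}_t\to 0$ (value $1$) and $\mathcal{C}_t=1$ (value $\nu-1$); selecting the latter gives the quoted $|1-\nu|$ and relies on the regime of $\nu$ being such that $\nu-1$ dominates the competing endpoint contribution, which is where the hypothesis $\nu>1$ is invoked. A secondary subtlety is the accuracy-change convention: strictly, a single-row swap in the test set moves $\mathcal{A}$ by only $1/|X_\text{test}|$, but the lemma adopts the looser size-independent estimate $|\Delta\mathcal{A}|\le 1$ so that the exponential-mechanism guarantee in Phase~2 holds uniformly over dataset sizes; I would flag this convention at the outset so the subsequent arithmetic is transparent.
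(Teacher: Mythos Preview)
Your argument is sound and arrives at the same bound, but the route differs from the paper's. The paper does not fix the ticket and factor; it expands $\mathcal{S}=\mathcal{A}-\mathcal{A}\nu\mathcal{C}$ and then lets \emph{both} $\mathcal{A}$ and $\mathcal{C}$ differ between the two neighbouring instances, plugging in the extreme assignment $(\mathcal{A},\mathcal{C})=(1,1)$, $(\mathcal{A}',\mathcal{C}')=(0,0)$ (and its mirror) to read off $|1-\nu|$ directly. You instead hew to the exponential-mechanism sensitivity definition in the paper's Definition~2: hold the output $r$ (the ticket, hence $\mathcal{C}_t$) fixed, vary only the dataset, factor the score as $(1-\nu\mathcal{C}_t)\,\mathcal{A}$, bound $|\Delta\mathcal{A}|\le 1$, and only afterwards maximise $|1-\nu\mathcal{C}_t|$ over tickets. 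Your decomposition is the one that matches the EM framework, and it surfaces the endpoint competition you correctly flag---the $\mathcal{C}_t\to 0$ endpoint contributes $1$, so the quoted $|1-\nu|$ is really the dominant term only when $\nu\ge 2$, not merely $\nu>1$; the paper's direct substitution hides this (and indeed its chosen worst case is not the worst for $1<\nu<2$). On the other hand, by allowing $\mathcal{C}$ to move with the data the paper is implicitly hedging against Phase~1 being data-dependent; your factorisation relies on $\mathcal{C}_t$ being a property of the ticket alone, which is true here because the pruning \emph{fraction} is fixed by $p$ and the round index, but you should state that explicitly since the mask itself is data-dependent.
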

\begin{proof}
We can write the score function as
\begin{equation}
\begin{split}
    & \mathcal{S}(\mathcal{C},\mathcal{A}) = \mathcal{A} \times (1 - (\nu \mathcal{C})) \\
    & = \mathcal{A} - \mathcal{A} \nu \mathcal{C} \\
\end{split}
\end{equation}
Using the definition of neighbouring datasets, we have the sensitivity as
\begin{equation}
    \begin{split}
        |\text{max}((\mathcal{A} - \mathcal{A} \nu \mathcal{C})) - (\mathcal{A}' - \mathcal{A}' \nu \mathcal{C}'))|
    \end{split}
\end{equation}
where $\mathcal{A}', \mathcal{C}'$ are ``neighbouring'' to $\mathcal{A},\mathcal{C}$.
\begin{equation}
    \begin{split}
       & \le |\text{max}(\mathcal{A} - \mathcal{A} \nu \mathcal{C} - \mathcal{A}' + \mathcal{A}' \nu \mathcal{C}')|\\
       & \le |\text{max}(\mathcal{A} - \mathcal{A}' - \nu (\mathcal{A}\mathcal{C} - \mathcal{A}' \mathcal{C}'))|\\
    \end{split}
\end{equation}
using $\nu>1$, for the worse case scenarios ($\mathcal{A,C}=1, \mathcal{A',C'}=0$, and $\mathcal{A,C}=0, \mathcal{A',C'}=1$), we get $\mathcal{S}(\mathcal{C},\mathcal{A}) = |1 - \nu|$
\end{proof}

\begin{restatable}{theorem}{emthm}\label{thm:em}
Phase 2 (Selecting a winning ticket) is ($\epsilon_1$) - differentially private.
\end{restatable}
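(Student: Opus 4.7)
The plan is to reduce Phase 2 to a direct instantiation of the Exponential Mechanism (Definition \ref{def:em}) and then invoke its standard $\epsilon$-differential privacy guarantee. Concretely, I would identify the components in Phase 2 with those of the EM: the range $\mathcal{R}$ is the set of $T$ candidate tickets output by Phase 1, the utility function is $u = \mathcal{S}(\mathcal{C},\mathcal{A}) = \mathcal{A}(1 - \nu \mathcal{C})$, the dataset $D$ is the sensitive input $X$ (entering $\mathcal{S}$ through the test-set accuracy $\mathcal{A}$ and, indirectly, through the pruning that determines $\mathcal{C}$), and the privacy parameter is $\epsilon_1$.

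Next, I would appeal to the preceding sensitivity lemma, which establishes that $\Delta \mathcal{S} = |1 - \nu|$ for $\nu > 1$. Plugging this $\Delta$ into the EM's sampling distribution yields exactly the probability
\[
P = \frac{\exp\!\bigl(\tfrac{\epsilon_1 \mathcal{S}}{2\Delta}\bigr)}{\sum_T \exp\!\bigl(\tfrac{\epsilon_1 \mathcal{S}}{2\Delta}\bigr)}
\]
used in Algorithm \ref{al:DPLTM}. Hence Phase 2 is literally an application of the EM to the score $\mathcal{S}$ with sensitivity $|1-\nu|$ and privacy parameter $\epsilon_1$.

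Finally, I would cite the well-known guarantee (stated right after Definition \ref{def:em}) that the EM preserves $\epsilon$-differential privacy, which immediately gives that Phase 2 is $\epsilon_1$-differentially private. The only subtle point, and what I expect to be the main obstacle, is justifying that the score function really has sensitivity $|1-\nu|$ with respect to the \emph{actual} input $X$ rather than the abstract pair $(\mathcal{A},\mathcal{C})$: one needs to argue that changing a single record in $X$ cannot perturb $(\mathcal{A},\mathcal{C})$ by more than the $\{0,1\}$ range used in the worst-case analysis of the sensitivity lemma. Since accuracies lie in $[0,1]$ and parameter proportions lie in $[0,1]$, this bound is immediate, and the EM guarantee then closes the proof.
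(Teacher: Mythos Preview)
Your proposal is correct and follows the same route as the paper: both arguments reduce Phase~2 to the Exponential Mechanism with score $\mathcal{S}$, sensitivity $|1-\nu|$, and parameter $\epsilon_1$. The only cosmetic difference is that the paper \emph{reproves} the standard EM privacy bound from scratch (writing out the ratio $\Pr[\mathcal{M}(X,u,\mathcal{R})=r]/\Pr[\mathcal{M}(X',u,\mathcal{R})=r]$ and bounding each factor by $\exp(\epsilon_1/2)$), whereas you invoke that bound as a black box via the remark after Definition~\ref{def:em}; your version is thus more concise but logically identical.
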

\begin{proof}

We consider the scenario where the EM outputs some element $r \in \mathcal{R}$ on two neighbouring datasets, $X,X'$.
\begin{equation}
    \dfrac{Pr[\mathcal{M}(X,u,\mathcal{R}) = r]}{Pr[\mathcal{M}(X',u,\mathcal{R}) = r]} = \dfrac{\bigg ( \dfrac{\exp(\dfrac{\epsilon_1 u (X,r)}{2 \Delta u})}{\sum_{r' \in \mathcal{R}} \exp(\dfrac{\epsilon_1 u (X,r')}{2 \Delta u})} \bigg )}{\bigg ( \dfrac{\exp(\dfrac{\epsilon_1 u (X',r)}{2 \Delta u})}{\sum_{r' \in \mathcal{R}} \exp(\dfrac{\epsilon_1 u (X',r')}{2 \Delta u})} \bigg )}
\end{equation}
\begin{equation}
= \bigg ( \dfrac{\exp(\dfrac{\epsilon_1 u (X,r)}{2 \Delta u})}{\exp(\dfrac{\epsilon_1 u (X',r)}{2 \Delta u})} \bigg ) . \bigg ( \dfrac{\sum_{r' \in \mathcal{R}} \exp(\dfrac{\epsilon_1 u (X,r')}{2 \Delta u})}{\sum_{r' \in \mathcal{R}} \exp(\dfrac{\epsilon_1 u (X',r')}{2 \Delta u})}  \bigg ) 
\end{equation}
\begin{equation}
= \exp \bigg ( \dfrac{\epsilon_1 (u(X,r') - u(X',r'))}{2 \Delta u} \bigg ) . \bigg ( \dfrac{\sum_{r' \in \mathcal{R}} \exp(\dfrac{\epsilon_1 u (X,r')}{2 \Delta u})}{\sum_{r' \in \mathcal{R}} \exp(\dfrac{\epsilon_1 u (X',r')}{2 \Delta u})}  \bigg )
\end{equation}
\begin{equation}
    \le \exp (\dfrac{\epsilon_1}{2}) . \exp (\dfrac{\epsilon_1}{2}) . \bigg ( \dfrac{\sum_{r' \in \mathcal{R}} \exp(\dfrac{\epsilon_1 u (X,r')}{2 \Delta u})}{\sum_{r' \in \mathcal{R}} \exp(\dfrac{\epsilon_1 u (X',r')}{2 \Delta u})}  \bigg )
\end{equation}
\begin{equation}
        \le \exp(\epsilon_1)
\end{equation}
\end{proof}

\subsubsection{Training the Winning Ticket}
After we select our winning ticket with differential privacy in phase 2. Our next step is to train the winning architecture in a differentially private fashion. For this step, we use the training process similar to DPSGD \citep{abadi2016deep}. Specifically, after calculating the per-observation gradients for a minibatch, we clip the gradients (line 25 in Algorithm \ref{al:DPLTM}) by their $l_2$ norm, scaled by a constant $C$, to enforce sensitivity, and then add appropriate Gaussian noise to ensure differential privacy, formally
\begin{theorem}\label{thm:dpsgd}
Phase 3 (Training the winning ticket) is ($\epsilon_2,\delta$) - differentially private, if we chose $\sigma \ge c \frac{L/N \sqrt{I\log(1/\delta)}}{\epsilon_2}$
\end{theorem}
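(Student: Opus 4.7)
The plan is to recognize Phase 3 as the differentially private SGD procedure analyzed in \citep{abadi2016deep} and invoke its moments-accountant argument. Each inner-loop iteration performs (i) Poisson subsampling of a minibatch at rate $q = L/N$, (ii) per-example gradient clipping to $\ell_2$-norm at most $C$, (iii) summation, (iv) addition of spherical Gaussian noise with standard deviation $\sigma C$, and (v) division by $L$; step (v) is pure post-processing and therefore incurs no privacy cost. Thus the only non-trivial analysis concerns steps (i)--(iv), iterated $I$ times.

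First I would pin down the per-step sensitivity: the clipping operator enforces $\|g_k(x_i)/\max(1,\|g_k(x_i)\|_2/C)\|_2 \le C$, so altering a single training record changes the pre-noise summed gradient by at most $C$ in $\ell_2$ norm. Adding $\mathcal{N}(0,\sigma^2 C^2 I)$ is then a standard Gaussian mechanism, whose privacy-loss random variable has moment-generating function bounded (at order $\alpha$) by $\alpha(\alpha+1)/(2\sigma^2)$. Second, I would apply privacy amplification by Poisson subsampling at rate $q$: by Lemma 3 of \citep{abadi2016deep}, the per-iteration log-moment of the subsampled Gaussian mechanism satisfies $\alpha(\lambda) = O(q^2 \lambda^2/\sigma^2)$ for $\lambda$ in the admissible range $\lambda \le \sigma^2 \log(1/(q\sigma))$.

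Third, since log-moments compose additively across independent iterations, after $I$ steps the accumulated log-moment is at most $I \cdot O(q^2 \lambda^2 /\sigma^2)$. Converting back to an $(\epsilon_2,\delta)$ guarantee via the standard tail inequality $\delta \ge \exp(\alpha(\lambda) - \lambda \epsilon_2)$ and optimizing $\lambda$ yields $\epsilon_2 = O\bigl(q\sqrt{I\log(1/\delta)}/\sigma\bigr)$, which rearranges to the claimed condition $\sigma \ge c\,(L/N)\sqrt{I\log(1/\delta)}/\epsilon_2$ for an absolute constant $c$. The main obstacle, if one were to reprove the result from scratch rather than cite it, would be the tight log-moment bound for the subsampled Gaussian mechanism: a naive strong-composition argument on the un-subsampled Gaussian mechanism would incur an extra $\sqrt{\log(1/\delta)}$ factor and give a weaker noise requirement. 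Because this tight analysis is precisely the content of Theorem 1 of \citep{abadi2016deep} and our Phase 3 matches their Algorithm 1 line-for-line (now applied to the pruned sub-network induced by the winning ticket $\mathcal{T}$, which only affects the shape of the gradient vector and not the sensitivity or noise calibration), the proof reduces to direct invocation of that theorem.
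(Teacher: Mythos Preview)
Your proposal is correct and matches the paper's approach: the paper's own proof consists of a single sentence stating that the argument is identical to that of \citep{abadi2016deep} with $\epsilon_2$ and $I$ in place of their parameters, and is omitted for space. You have simply unpacked what that citation entails (per-step clipping sensitivity, subsampled Gaussian log-moment bound, additive composition across $I$ steps, tail conversion), which is more detail than the paper itself provides but is exactly the intended argument.
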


\begin{proof}
Proof is similar to \citep{abadi2016deep} using $\epsilon_2$ and $I$ and is omitted here for space constraints.
\end{proof}

\subsubsection{Putting it All Together}
After generating the lottery tickets, selecting the winner with differential privacy, and the differentially private training of the winning ticket, we are now ready to ``put it all together'' and state the overall privacy guarantees of our proposed method (DPLTM).

\begin{theorem}\label{thm:main}
Algorithm \ref{al:DPLTM} is ($\epsilon,\delta$) - differentially private, with $\epsilon>0, \delta>0$
\end{theorem}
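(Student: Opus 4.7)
My plan is to treat Algorithm~\ref{al:DPLTM} as the adaptive composition of two mechanisms that both touch the sensitive dataset $X$, and then invoke the per-phase guarantees already proved. Since Phase~1 (\textsc{LTG}) releases nothing externally (its outputs $\mathcal{C}, \mathcal{A}, M, \theta_0$ are consumed only inside Phases~2 and~3), the observable output of Algorithm~\ref{al:DPLTM} is the pair $(\mathcal{T},\theta_I)$ returned by \textsc{DPWT} followed by \textsc{DPTWT}.

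I would proceed in three steps. First, bundle Phase~1 with Phase~2 into a single mechanism $\mathcal{M}_1(X)$ that returns the winning ticket $\mathcal{T}$; by Theorem~\ref{thm:em}, $\mathcal{M}_1$ is $\epsilon_1$-differentially private (the sensitivity $\Delta=|1-\nu|$ from the preceding lemma combined with the standard EM calculation gives the bound). Second, view Phase~3 as a mechanism $\mathcal{M}_2(X;\mathcal{T})$ taking the output of $\mathcal{M}_1$ as auxiliary input and returning the trained parameters $\theta_I$; by Theorem~\ref{thm:dpsgd}, $\mathcal{M}_2$ is $(\epsilon_2,\delta)$-differentially private under the stated noise scale $\sigma \ge c\,(L/N)\sqrt{I\log(1/\delta)}/\epsilon_2$. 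Third, apply the basic (adaptive) composition theorem of differential privacy to the sequence $(\mathcal{M}_1,\mathcal{M}_2)$ executed on the same $X$: the joint release is $(\epsilon_1+\epsilon_2,\delta)$-DP. Setting $\epsilon := \epsilon_1 + \epsilon_2 > 0$ yields the claimed $(\epsilon,\delta)$-DP guarantee with $\epsilon,\delta > 0$.

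The main obstacle is not the composition itself but a subtlety inside Phase~2 that Theorem~\ref{thm:em} quietly sidesteps: because Phase~1 trains on $X$, the candidate set $\{m'_i \odot \theta_0\}_{i=1}^T$ itself depends on $X$, whereas Definition~\ref{def:em} assumes a data-independent range $\mathcal{R}$. The cleanest remedy is to let the EM output an index $i \in \{1,\dots,T\}$ instead of a mask, making $\mathcal{R}$ data-independent, and then recover the selected architecture by post-processing the index together with the internal masks stored by \textsc{LTG}. Post-processing preserves $\epsilon_1$-DP, and this framing also makes the invocation of adaptive composition in the final step unambiguous, since $\mathcal{M}_2$ is then formally permitted to depend arbitrarily on the (DP) output of $\mathcal{M}_1$.
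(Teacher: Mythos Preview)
Your core argument---invoke Theorem~\ref{thm:em} for Phase~2, Theorem~\ref{thm:dpsgd} for Phase~3, and then basic (adaptive) composition to obtain $(\epsilon_1+\epsilon_2,\delta)$---is exactly the paper's proof; the paper says nothing beyond that.

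Where you go further than the paper is in flagging the data-dependent range of the Exponential Mechanism, which the paper indeed does not address. However, your proposed remedy does not actually close that gap. Letting the EM output an index $i\in\{1,\dots,T\}$ makes $\mathcal{R}$ data-independent, but the step ``recover the selected architecture by post-processing the index together with the internal masks stored by \textsc{LTG}'' is \emph{not} post-processing in the DP sense: the masks $m'_i$ (and $\theta_0$) were computed from $X$ in Phase~1 with no noise, so the map $i \mapsto (m'_i,\theta_0)$ is a function of the private data, not a data-independent transformation of the DP output. If instead you absorb the mask lookup into $\mathcal{M}_2$, then $\mathcal{M}_2$ accesses $X$ twice---once to determine the architecture (via Phase~1's non-private training) and once to compute gradients---and Theorem~\ref{thm:dpsgd} only accounts for the second access, since the DPSGD analysis assumes a fixed, data-independent architecture. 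Either way, the released model's sparsity pattern still carries unbounded information about $X$. This is a defect of the algorithm as stated rather than of your composition argument, but your write-up should not claim that the index-plus-post-processing reformulation resolves it.
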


\begin{proof}
We have already shown that phase 2 is $\epsilon_1$-differentially private and phase 3 is $\epsilon_2,\delta$-differentially private. Using the ``naive'' composition \citep{Dwork:2006:CNS:2180286.2180305}\footnote{As we are only composing two mechanisms, advanced composition is not necessary.}, it is easy to see that the Algorithm \ref{al:DPLTM} is $\epsilon,\delta$-differentially private, with $\epsilon = \epsilon_1 + \epsilon_2$ and $\delta = \delta$.
\end{proof}

\subsection{Discussion}\label{sec:discussion}
We use this section to discuss some interesting properties of DPLTM. A first observation is the seamless integration of differential privacy with the lottery ticket mechanism in DPLTM, making it accessible for implementation, while providing significantly better utility compared to the ``naive'' DPSGD (Experiments in Section \ref{sec:exp} support this claim). As per earlier discussions, the success of our method is hinged on the existence of a winning ticket with \textit{fewer} number of model parameters and high utility. So it is vital that out of all available tickets, the probability of selecting the winner with a small number of model parameters and high accuracy is high. Our custom utility function $\mathcal{S}$ strives to achieve this goal by ensuring that the selection process does not degenerate to uniform random sampling. In particular, the utility function assigns more weight to the models with high accuracy and a small number of model parameters, where importance of either is modulated using the constant $\nu$, large $\nu$ assigns more weight to $\mathcal{C}$ (proportion of model parameters in the ticket). For good utility with tight privacy, we advocate using large values for $\nu (\nu \gg 1)$ as most tickets have comparative performance, hence it is of best interest to select the ticket with smallest number of model parameters.

As we observe from Theorem \ref{thm:main}, total privacy budget for our method is composed of two parts. The privacy budget from the EM phase used to select the winning ticket and the privacy budget to train the winning ticket. Hence, we need to decide on the overall privacy budget split. That is, the portion of the budget to allocate to the drawing of the winning ticket and the portion of the privacy budget for the training of the winning ticket. We advocate dedicating a \emph{large} proportion of the privacy budget to the training of the winning ticket and a small portion for selecting the winning ticket. With our custom utility function, a small privacy budget suffices for selecting a ``good'' ticket (empirical evidence provided in Section \ref{sec:exp}). A question the readers might ask: Why can't we train the full network differentially privately when generating lottery tickets? That is, why do we produce non-private tickets first and then train a differentially private model. The answer is simple, as differential privacy composes by iteration for minibatch stochastic gradient descent, training multiple networks using the methodology described in Algorithm \ref{al:DPLTM} would result in a large privacy budget, and hence, noisier models, providing worse utility compared to our proposed method. 

\section{Experiments} \label{sec:exp}
Now we provide empirical evidence on various datasets to support our claim that our proposed method (DPLTM) significantly outperforms DPSGD. We begin by describing the datasets.

\subsection{Datasets}
For our empirical evaluation, we use five publicly available datasets. Dataset details are provided in Table \ref{tab:data}. 

\begin{table}[h]
\centering
\begin{tabular}{llll}
\hline
Dataset         & Attributes & Observations & Class \\ \hline
MNIST           & 784        & 60000        & 10    \\
HAR             & 521        & 10299        & 6     \\
Fashion-MNIST   & 784        & 60000        & 10    \\
Kuzushiji-MNIST & 784        & 60000        & 10    \\
ISOLET          & 617        & 7797         & 26   
\end{tabular}
\caption{Dataset details, Attributes is the dataset dimensionality, Observations are the number of rows, and Class is the number of classes in the classification target.}\label{tab:data}
\end{table}

\subsection{Setup}
For generating lottery tickets (phase 1), our implementation is based on the publicly available source code\footnote{\url{https://github.com/google-research/lottery-ticket-hypothesis}}. Our underlying base model is a  fully connected neural network with three layers. Hidden layers use ReLU \citep{nair2010rectified} as the activation function. Learning rate is kept fixed at 0.1 and minibatch size is kept fixed at 400. We set the pruning percent, $p$, at 30\% for the first two layers and 20\% for the last layer. Which means that for each subsequent ticket, the model will prune 30\% of the weights compared to the previous ticket for the first two layers and 20\% of the weights for the final layer. To generate lottery tickets, the mechanism is run for 5000 iterations ($\mathcal{I}$) for each ticket. Differentially private training of the winning ticket is run for 50 epochs. 

DPSGD's implementation is based on the publicly available source code\footnote{\url{https://github.com/tensorflow/privacy}}. Clipping norm for DPSGD and DPLTM is set at a constant of 1 for all experiments. All the rest of the hyperparameters, including the underlying model architecture, are the same for DPSGD as in DPLTM to ensure a fair comparison.

\begin{figure*}[h!]
 \centering
    \begin{subfigure}[t]{.28\textwidth}
    \resizebox{\linewidth}{!}{
       \begin{tikzpicture}
\begin{axis}[
    title={MNIST},
    xlabel={$\epsilon$},
    ylabel={Accuracy},
    ymin=70, ymax=100,
    xtick={0,1,2,3},
    xticklabels={1.0,0.5,0.25,0.2},
    ytick={70,80,90,100},
    label style={font=\Large},
    ticklabel style={font=\Large},
    title style={font=\Large},
    legend style={at={(0.05,.4)},anchor=north west},
    ymajorgrids=true,
    grid style=dashed,
    width=7cm,height=5cm,
]

\addplot+[error bars/.cd,y dir=both,y explicit][
    color=red,
    mark=circle
    ]
    coordinates {
    (0,93.9)+-(0,0.07)(1,93.3)+-(0,0.1)(2,90.2)+-(0,0.08)(3,84.0)+-(0,0.8)
    };
    \addlegendentry{DPLTM}

\addplot+[error bars/.cd,y dir=both,y explicit][
    color=blue,
    mark=circle,
    ]
    coordinates {
    (0,92.2)+-(0,0.1)(1,90.7)+-(0,0.2)(2,84.3)+-(0,0.4)(3,78.3)+-(0,0.6)
    };
    \addlegendentry{DPSGD}

\addplot[
    color=green,
    ]
    coordinates {
    (0,95.2)(1,95.2)(2,95.2)(3,95.2)
    };
    \addlegendentry{NP}

\end{axis}
\end{tikzpicture}}
        \caption{}
    \end{subfigure}%
    \begin{subfigure}[t]{.28\textwidth}
    \resizebox{\linewidth}{!}{
       \begin{tikzpicture}
\begin{axis}[
    title={HAR},
    xlabel={$\epsilon$},
    ylabel={Accuracy},
    ymin=30, ymax=90,
    xtick={0,1,2,3},
    xticklabels={1.0,0.5,0.25,0.2},
    ytick={30,50,70,90},
    label style={font=\Large},
    ticklabel style={font=\Large},
    title style={font=\Large},
    legend style={at={(0.05,.4)},anchor=north west},
    ymajorgrids=true,
    grid style=dashed,
    width=7cm,height=5cm,
]

\addplot+[error bars/.cd,y dir=both,y explicit][
    color=red,
    mark=circle
    ]
    coordinates {
    (0,77.8)+-(0,0.05)(1,77.3)+-(0,1)(2,70.0)+-(0,3)(3,61.0)+-(0,4)
    };

\addplot+[error bars/.cd,y dir=both,y explicit][
    color=blue,
    mark=circle,
    ]
    coordinates {
    (0,70.5)+-(0,1)(1,64.1)+-(0,2)(2,41.2)+-(0,3)(3,38.9)+-(0,5)
    };

\addplot[
    color=green,
    ]
    coordinates {
    (0,87)(1,87)(2,87)(3,87)
    };

\end{axis}
\end{tikzpicture}}
        \caption{}
    \end{subfigure}%
       \begin{subfigure}[t]{.28\textwidth}
    \resizebox{\linewidth}{!}{
       \begin{tikzpicture}
\begin{axis}[
    title={Fashion-MNIST},
    xlabel={$\epsilon$},
    ylabel={Accuracy},
    ymin=70, ymax=85,
    xtick={0,1,2,3},
    xticklabels={1.0,0.5,0.25,0.2},
    ytick={70,75,80,85},
    label style={font=\Large},
    ticklabel style={font=\Large},
    title style={font=\Large},
    legend style={at={(0.05,.4)},anchor=north west},
    ymajorgrids=true,
    grid style=dashed,
    width=7cm,height=5cm,
]

\addplot+[error bars/.cd,y dir=both,y explicit][
    color=red,
    mark=circle
    ]
    coordinates {
    (0,83)+-(0,0.1)(1,82.6)+-(0,0.06)(2,80.9)+-(0,0.05)(3,76.7)+-(0,1)
    };

\addplot+[error bars/.cd,y dir=both,y explicit][
    color=blue,
    mark=circle,
    ]
    coordinates {
    (0,82.4)+-(0,0.2)(1,81.5)+-(0,0.2)(2,75.6)+-(0,0.3)(3,71.5)+-(0,0.9)
    };

\addplot[
    color=green,
    ]
    coordinates {
    (0,84)(1,84)(2,84)(3,84)
    };

\end{axis}
\end{tikzpicture}}
        \caption{}
    \end{subfigure}%
    \medskip\\
        \begin{subfigure}[t]{.28\textwidth}
    \resizebox{\linewidth}{!}{
       \begin{tikzpicture}
\begin{axis}[
    title={Kuzushiji-MNIST},
    xlabel={$\epsilon$},
    ylabel={Accuracy},
    ymin=40, ymax=85,
    xtick={0,1,2,3},
    xticklabels={1.0,0.5,0.25,0.2},
    ytick={40,55,70,85},
    label style={font=\Large},
    ticklabel style={font=\Large},
    title style={font=\Large},
    legend style={at={(0.05,.4)},anchor=north west},
    ymajorgrids=true,
    grid style=dashed,
    width=7cm,height=5cm,
]

\addplot+[error bars/.cd,y dir=both,y explicit][
    color=red,
    mark=circle
    ]
    coordinates {
    (0,78.2)+-(0,0.3)(1,76.3)+-(0,0.2)(2,66.3)+-(0,1)(3,59.4)+-(0,1)
    };

\addplot+[error bars/.cd,y dir=both,y explicit][
    color=blue,
    mark=circle,
    ]
    coordinates {
    (0,72.0)+-(0,0.4)(1,68.5)+-(0,0.5)(2,57.2)+-(0,0.6)(3,49.8)+-(0,0.7)
    };

\addplot[
    color=green,
    ]
    coordinates {
    (0,79.8)(1,79.8)(2,79.8)(3,79.8)
    };

\end{axis}
\end{tikzpicture}}
        \caption{}
    \end{subfigure}%
       \begin{subfigure}[t]{.28\textwidth}
    \resizebox{\linewidth}{!}{
       \begin{tikzpicture}
\begin{axis}[
    title={ISOLET},
    xlabel={$\epsilon$},
    ylabel={Accuracy},
    ymin=0, ymax=100,
    xtick={0,1,2,3},
    xticklabels={1.0,0.5,0.25,0.2},
    ytick={0,25,50,75,100},
    label style={font=\Large},
    ticklabel style={font=\Large},
    title style={font=\Large},
    legend style={at={(0.05,.4)},anchor=north west},
    ymajorgrids=true,
    grid style=dashed,
    width=7cm,height=5cm,
]

\addplot+[error bars/.cd,y dir=both,y explicit][
    color=red,
    mark=circle
    ]
    coordinates {
    (0,91.6)+-(0,0.3)(1,90.2)+-(0,0.5)(2,79.6)+-(0,2)(3,57.8)+-(0,3)
    };

\addplot+[error bars/.cd,y dir=both,y explicit][
    color=blue,
    mark=circle,
    ]
    coordinates {
    (0,78.7)+-(0,0.7)(1,61.0)+-(0,1)(2,25.6)+-(0,2)(3,17.5)+-(0,3)
    };

\addplot[
    color=green,
    ]
    coordinates {
    (0,96)(1,96)(2,96)(3,96)
    };

\end{axis}
\end{tikzpicture}}
\caption{}
    \end{subfigure}%
     \caption{Main comparison: Our proposed method, DPLTM (red line) significantly outperforms our competitor, DPSGD (blue line) on all datasets and for all privacy budgets. The non-noisy model (provided as an upper bound on the performance achievable using this architecture) is presented as the green line. Error bars on the plot represent the standard deviation.}\label{fig:main_res}
\end{figure*}
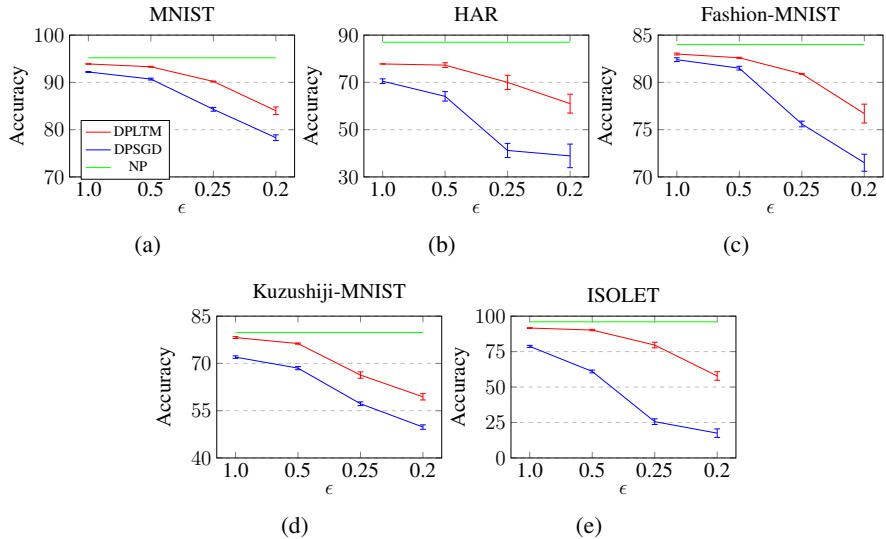

If the input dataset does not have a predefined train/test partition, we use 80/20 split, with 20\% of the dataset used as the test set. All models are run for 10 iterations and we report the average results. For privacy, $\delta$ is kept fixed at $10^{-5}$ with $\epsilon$ varied as required and reported. For our model, the privacy budget split is set at 90/10. That is, we reserve 90\% of the privacy budget for the differentially private training of the winning ticket and 10\% for the differentially private selection of the winning ticket. $\nu$ is kept fixed at 50 for all experiments. 

\subsection{Main Comparison}
Figure \ref{fig:main_res} shows the results of our main comparison with DPSGD. First and the obvious observation is that our proposed model (DPLTM, red line) significantly outperforms our competitor (DPSGD, blue line) with an average $14\%$ margin of improvement over all settings and datasets. This provides evidence for our earlier claim that our proposed method provides significantly better utility compared to DPSGD. The second observation is as the privacy budget gets tighter ($\epsilon$ decreases), the performance gap between our proposed method (DPLTM) and DPSGD increases. That is, DPLTM is ``robust'' compared to DPSGD. The performance gap is specifically larger for small datasets (HAR and ISOLET), as naive DPSGD suffers from worse utility degradation when dataset size is small, due to the interplay between the clipping and sampling probability (See Algorithm 1 and Theorem 1 of \cite{abadi2016deep}).

The utility boost is observed due to the reasons discussed in the introduction. That is, in DPLTM, we consistently select the \emph{winning architecture} with small number of model parameters and high performance compared to the full model used in DPSGD\footnote{DPLTM consistently selects winning tickets with total parameters $\le 10\%$ of the full model.}. Hence the ``clipping'' has a relatively diminished impact on DPLTM's performance compared to DPSGD, leading to overall better utility and robust models. This also aids in faster convergence for our method, which we study in detail in the next section.

\begin{figure*}[]
 \centering
    \begin{subfigure}{.3\textwidth}
     \includegraphics[scale=0.138]{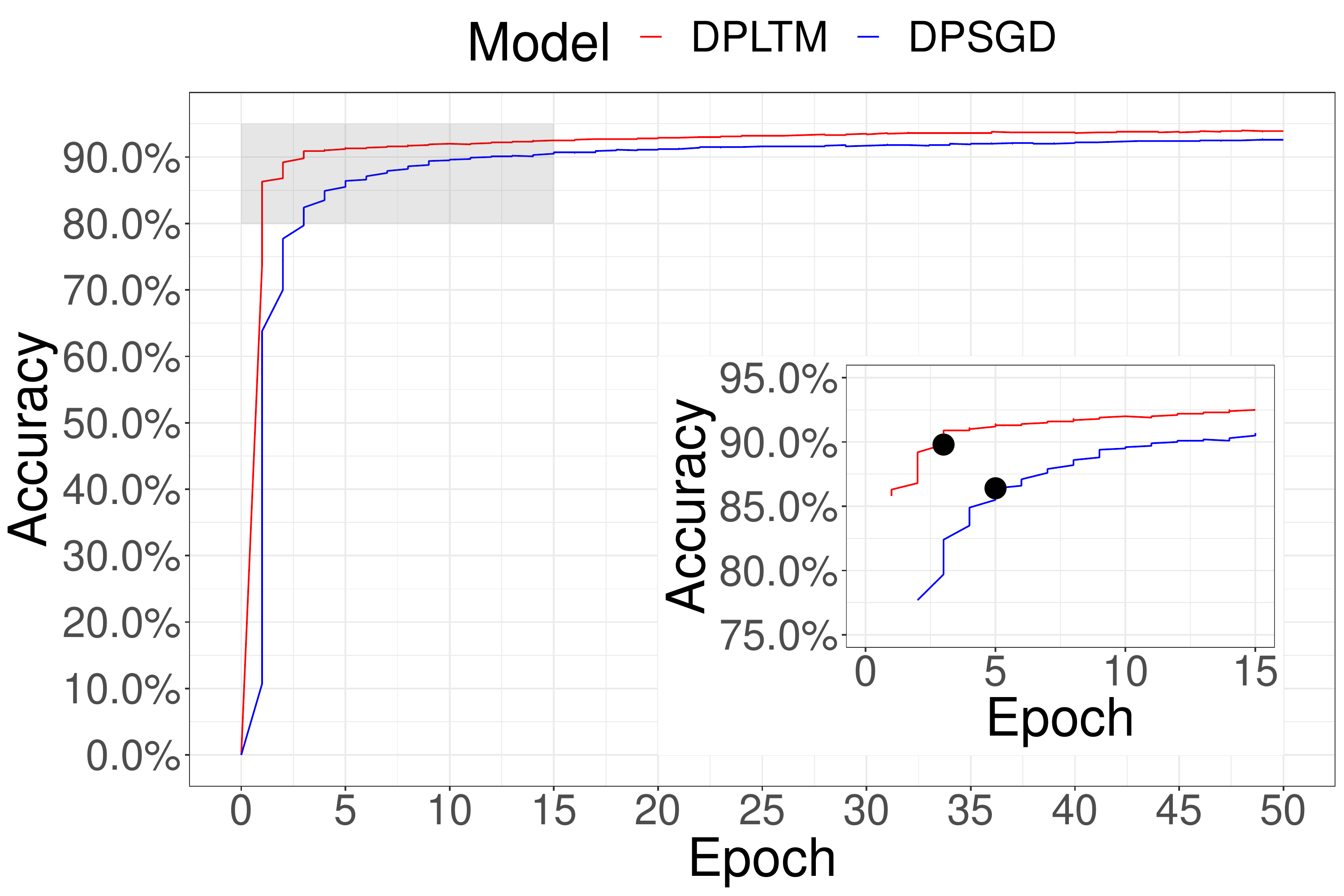}
        \caption{MNIST}
    \end{subfigure}%
    \begin{subfigure}{.3\textwidth}
    \includegraphics[scale=0.138]{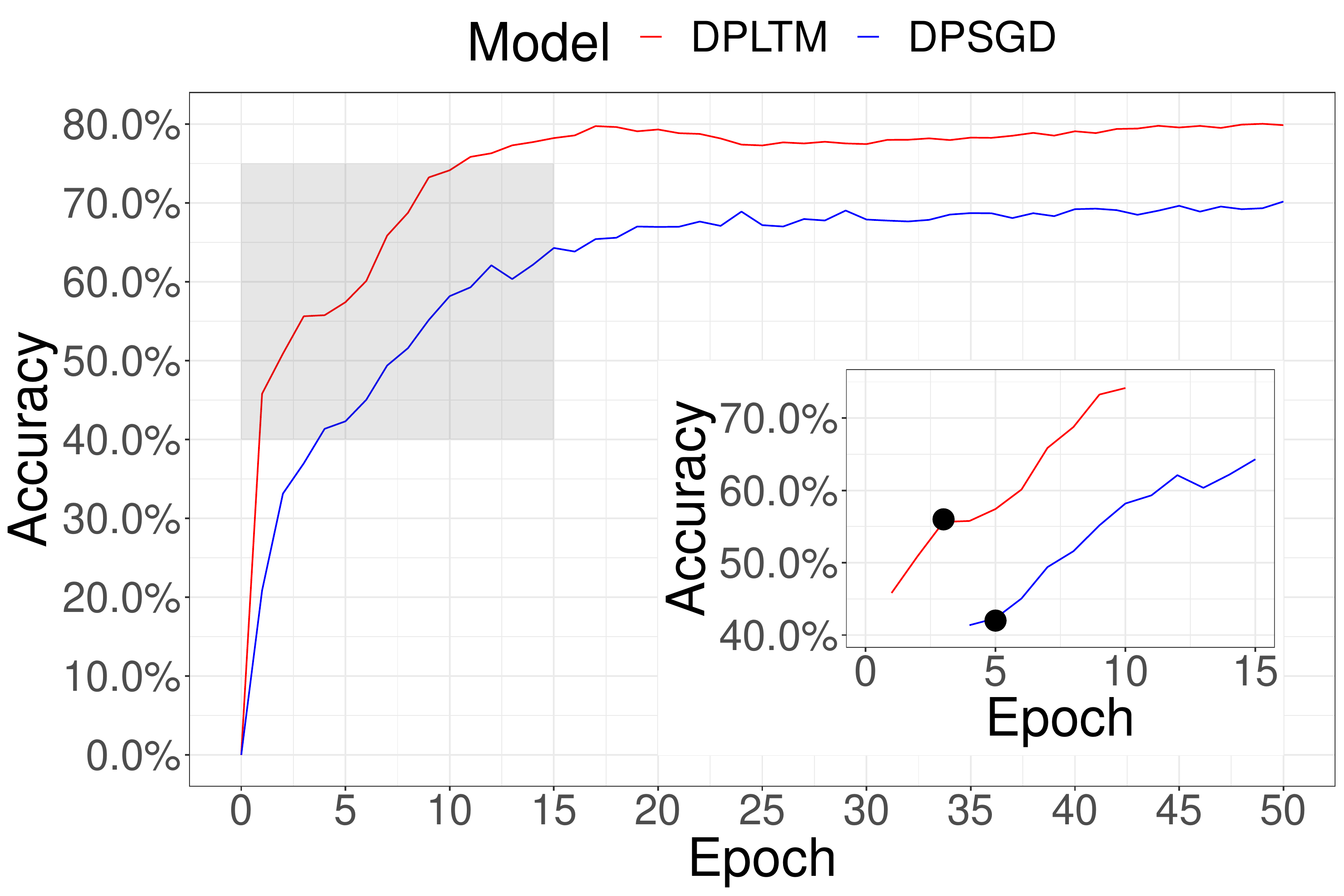}
    \caption{HAR}
    \end{subfigure}%
         \begin{subfigure}{.3\textwidth}
     \includegraphics[scale=0.138]{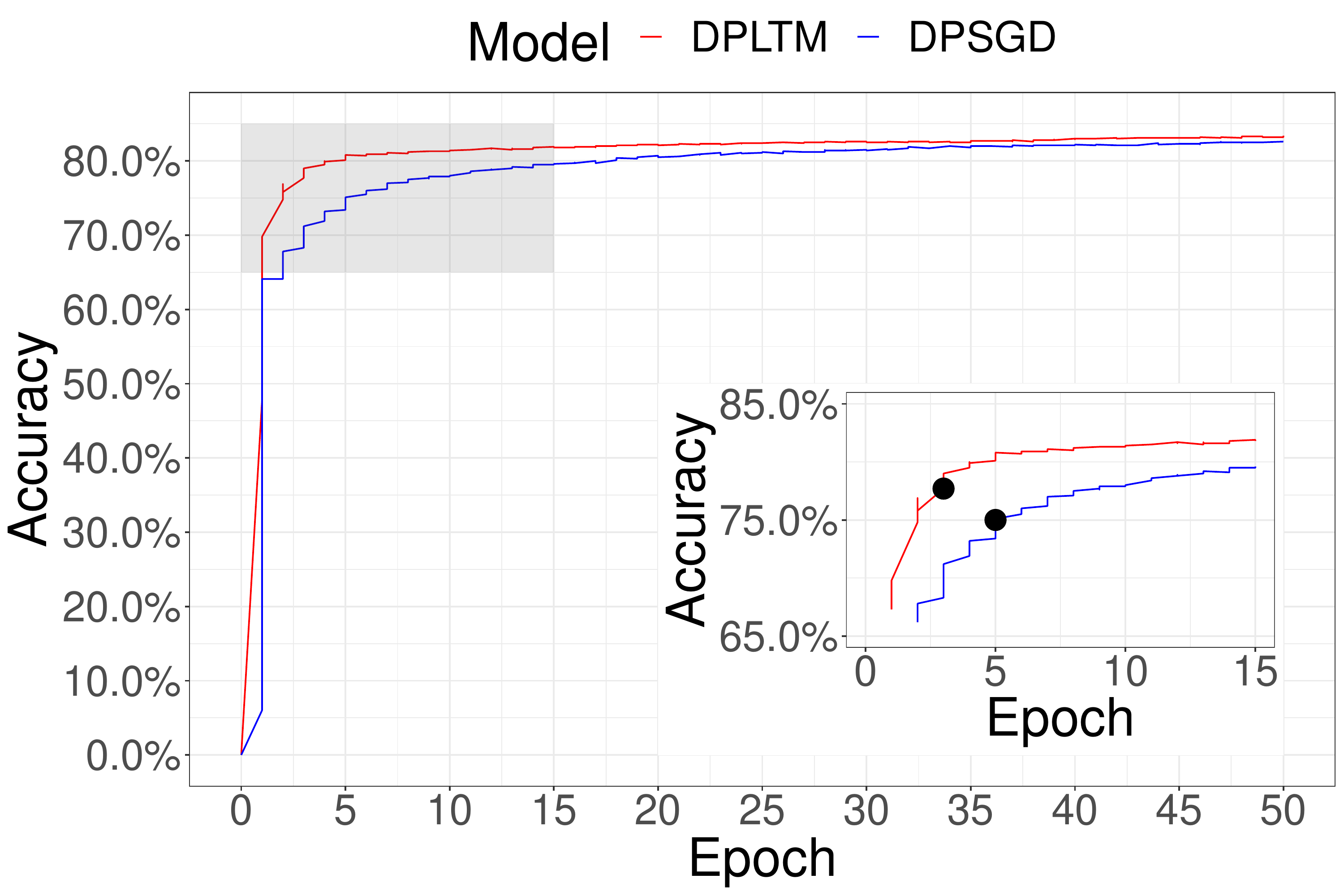}
        \caption{Fashion-MNIST}
    \end{subfigure}%
         \medskip\\
    \begin{subfigure}{.3\textwidth}
    \includegraphics[scale=0.138]{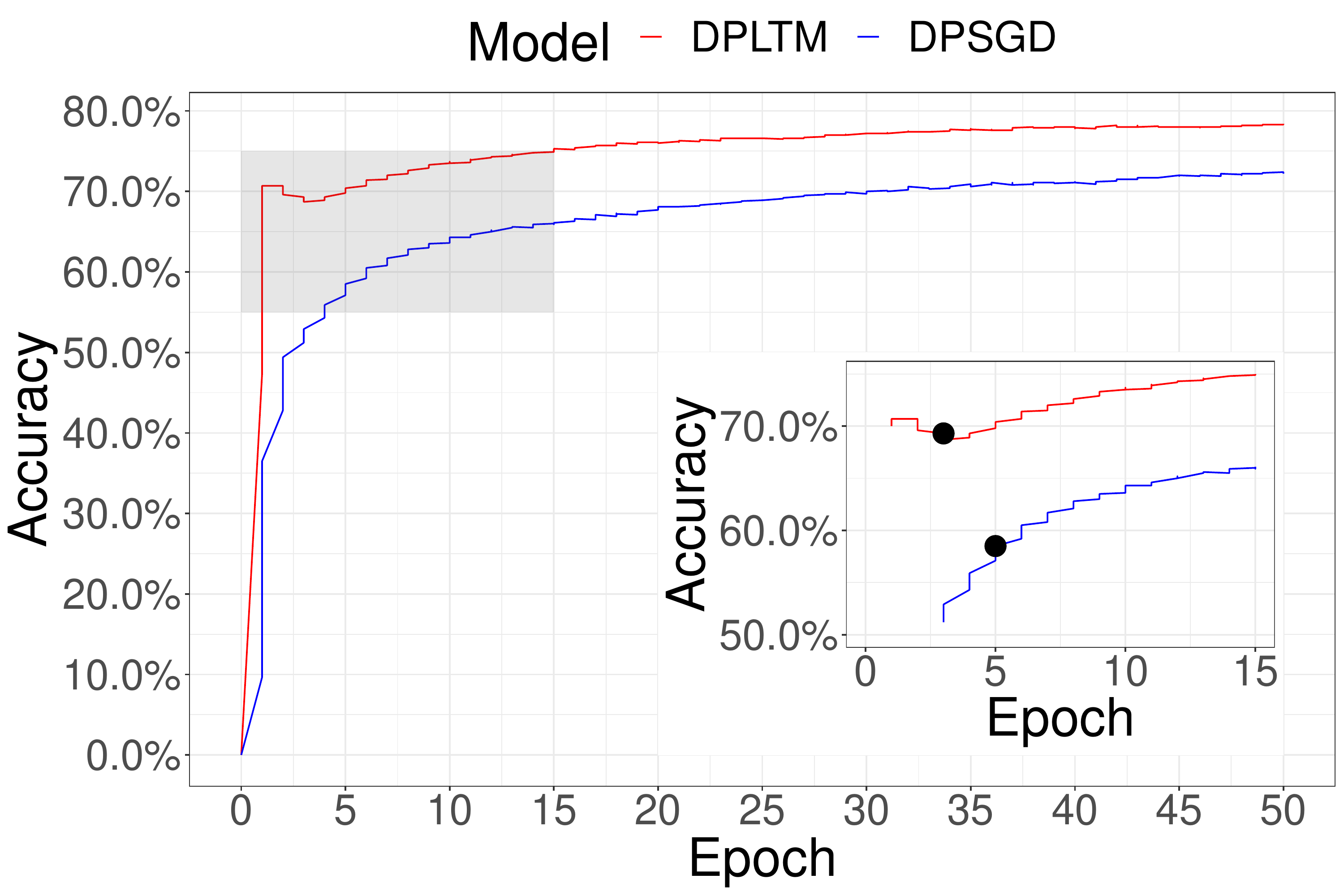}
    \caption{Kuzushiji-MNIST}
    \end{subfigure}%
     \begin{subfigure}{.3\textwidth}
    \includegraphics[scale=0.138]{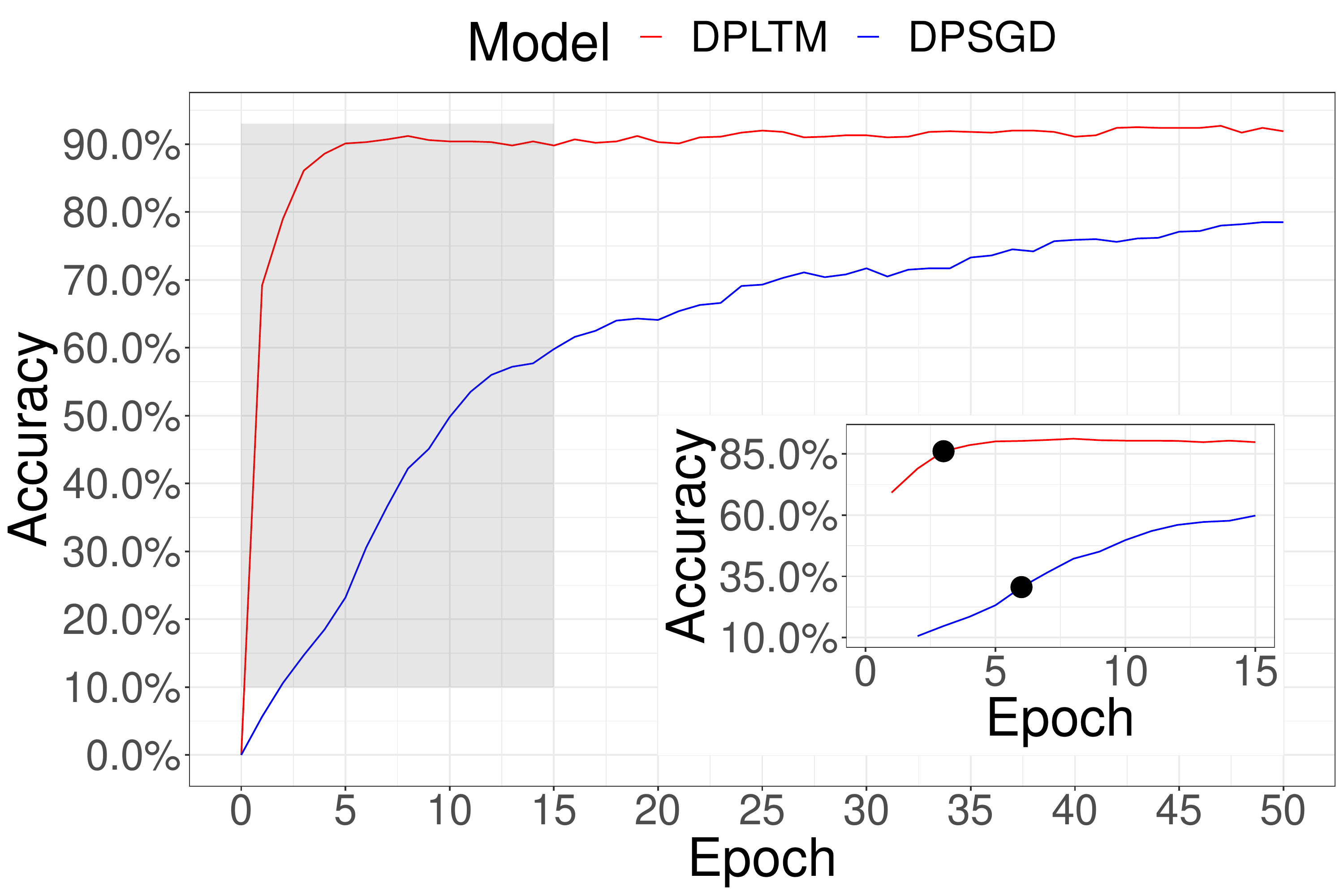}
    \caption{ISOLET}
    \end{subfigure}%
    \caption{Convergence vs the number of epochs for DPLTM and DPSGD. Red line is our proposed method (DPLTM), the blue line is the competitor (DPSGD). The X-axis shows the number of epochs and Y-axis shows the accuracy on the test set. ``Inset'' plots show the zoomed version of the highlighted area to focus on the plot region when privacy consumption is low. Black dots on the zoomed version signify the points where both models have consumed privacy, $\epsilon=0.3$. We can observe that DPLTM converges much faster compared to  DPSGD with a significant performance advantage.}\label{fig:con}
\end{figure*}

\subsection{Convergence and Early Stopping}
As we briefly mentioned earlier, DPLTM provides \emph{faster} convergence compared to DPSGD, and hence allows for early stopping with reduced consumption of the total privacy budget. This is made possible by keeping track of the privacy loss at each iteration and stopping when the desired accuracy/privacy budget is reached. Here we investigate the claim in detail. We keep the experimental setup the same as in the previous section, with the privacy budget fixed at $\epsilon=1, \delta=10^{-5}$. Figure \ref{fig:con} shows the results. Main plots show the accuracy on the test set as a function of the number of epochs. To focus on the convergence during earlier iterations (when privacy consumption is low), we have provided ``inset'' plots that are the zoomed version of the highlighted area on the main plot (including first 15 epochs).

We observe that irrespective of the dataset, DPLTM converges faster compared to DPSGD. The highlighted area in the plots shows the scenario if we were to use early stopping at the point when the total privacy consumption is $\epsilon=0.3$. In the inset plot, the black dots represent the points where both models (DPLTM and DPSGD) will consume that privacy budget. As DPLTM has extra privacy cost of picking the winner, we see that DPLTM will run for ``fewer'' epochs compared to DPSGD. But, irrespective of the fact that DPLTM runs for fewer epochs, it still outperforms DPSGD by a margin on average of $> 17 \%$, which is sometimes greater than $50\%$ (in case of ISOLET). A brief insight into this performance gain is as follows: As the updates in DPLTM are inherently ``less noisy'' due to the reduced number of model parameters, DPLTM converges much faster using less number of iterations compared to DPSGD.

\subsection{Ticket Transfer}
So far, we have seen that our proposed method significantly outperforms DPSGD on all datasets and for all privacy budgets while providing faster convergence. We use this section to further explore DPLTM. Mainly, we answer the question: As is recently discovered for generic lottery ticket mechanism \cite{morcos2019one}, that a winning ticket can generalize across datasets, can we extend this notion to DPLTM? Where we can use a publicly available dataset to get a winning ticket in a non-private setting, and then use that winning ticket to train a differentially private model on our sensitive dataset. This has a unique advantage of allowing us to ``get rid'' of the privacy cost related to the differentially private selection of the winning ticket using EM, and allows us to focus our full privacy budget on the training of the winning ticket. 

However, sometimes the publicly available datasets are not similar to the sensitive data (different input dimensionality, different domain, different number of outcome classes, etc.). For such scenarios, we investigate the feasibility of using ``non-compatible'' public datasets for generating lottery tickets. For this investigation, we start with exploring the case where we transfer tickets across the similar architectures (i.e. same dimensionality and outcome classes), and then explore transfer across non-compatible datasets. 

\subsubsection{Transfer Across Similar Architectures}
To test the ticket transfer across similar architectures, we use the Kuzushiji-MNIST dataset to select a winning ticket\footnote{Winning ticket, in this case, is the smallest model closest to the test accuracy on the full model, such that the model parameters in the small model are $\le 10\%$ of the full model.}, and then use the winning ticket to train a differentially private model for MNIST and Fashion-MNIST. Figure \ref{fig:trans_diff} (a,b) shows the results. We observe that in the beginning, when the privacy budget is ``loose'', that is, in case of less noise, DPLTM trained on a ticket from ``another'' similar dataset performs similar to DPSGD. But, as the privacy gets tight (decreasing $\epsilon$), DPLTM outperforms DPSGD by a significant margin. This is due to the same phenomenon as discussed earlier, that is, the updates in DPLTM are inherently \emph{less noisy} compared to DPSGD, hence DPLTM can \emph{sustain} a good performance while providing tight privacy guarantees.

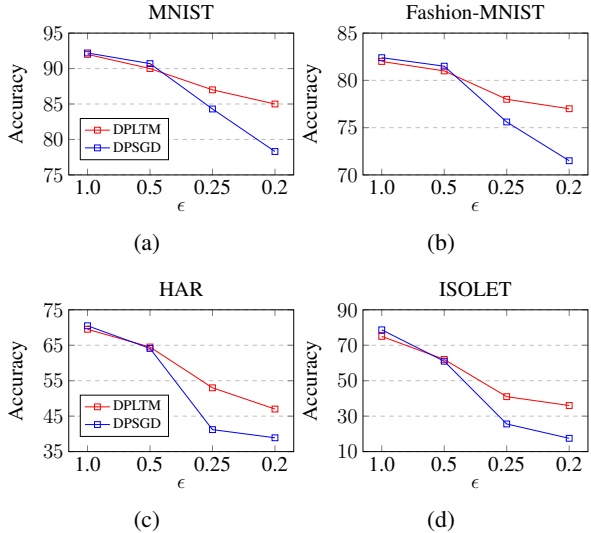
\begin{figure}[h]
 \centering
    \begin{subfigure}[t]{.28\textwidth}
    \resizebox{\linewidth}{!}{
       \begin{tikzpicture}
\begin{axis}[
    title={MNIST},
    xlabel={$\epsilon$},
    ylabel={Accuracy},
    ymin=75, ymax=95,
    xtick={0,1,2,3},
    xticklabels={1.0,0.5,0.25,0.2},
    ytick={75,80,85,90,95},
    label style={font=\Large},
    ticklabel style={font=\Large},
    title style={font=\Large},
    legend style={at={(0.05,.4)},anchor=north west},
    ymajorgrids=true,
    grid style=dashed,
    width=7cm,height=5cm,
]

\addplot[
    color=red,
    mark=square,
    ]
    coordinates {
    (0,92)(1,90)(2,87)(3,85)
    };
    \addlegendentry{DPLTM}

\addplot[
    color=blue,
    mark=square,
    ]
    coordinates {
    (0,92.2)(1,90.7)(2,84.3)(3,78.3)
    };
    \addlegendentry{DPSGD}

\end{axis}
\end{tikzpicture}}
        \caption{}
    \end{subfigure}%
    \begin{subfigure}[t]{.28\textwidth}
    \resizebox{\linewidth}{!}{
       \begin{tikzpicture}
\begin{axis}[
    title={Fashion-MNIST},
    xlabel={$\epsilon$},
    ylabel={Accuracy},
    ymin=70, ymax=85,
    xtick={0,1,2,3},
    xticklabels={1.0,0.5,0.25,0.2},
    ytick={70,75,80,85},
    label style={font=\Large},
    ticklabel style={font=\Large},
    title style={font=\Large},
    legend style={at={(0.05,.4)},anchor=north west},
    ymajorgrids=true,
    grid style=dashed,
    width=7cm,height=5cm,
]

\addplot[
    color=red,
    mark=square,
    ]
    coordinates {
    (0,82)(1,81)(2,78)(3,77)
    };

\addplot[
    color=blue,
    mark=square,
    ]
    coordinates {
    (0,82.4)(1,81.5)(2,75.6)(3,71.5)
    };

\end{axis}
\end{tikzpicture}}
\caption{}
    \end{subfigure}%
   \medskip\\
    \begin{subfigure}[t]{.28\textwidth}
    \resizebox{\linewidth}{!}{
       \begin{tikzpicture}
\begin{axis}[
    title={HAR},
    xlabel={$\epsilon$},
    ylabel={Accuracy},
    ymin=35, ymax=75,
    xtick={0,1,2,3},
    xticklabels={1.0,0.5,0.25,0.2},
    ytick={35,45,55,65,75},
    label style={font=\Large},
    ticklabel style={font=\Large},
    title style={font=\Large},
    legend style={at={(0.05,.4)},anchor=north west},
    ymajorgrids=true,
    grid style=dashed,
    width=7cm,height=5cm,
]

\addplot[
    color=red,
    mark=square,
    ]
    coordinates {
    (0,69.5)(1,64.5)(2,53)(3,47)
    };
    \addlegendentry{DPLTM}

\addplot[
    color=blue,
    mark=square,
    ]
    coordinates {
    (0,70.5)(1,64.1)(2,41.2)(3,38.9)
    };
    \addlegendentry{DPSGD}

\end{axis}
\end{tikzpicture}}
        \caption{}
    \end{subfigure}%
    \begin{subfigure}[t]{.28\textwidth}
    \resizebox{\linewidth}{!}{
       \begin{tikzpicture}
\begin{axis}[
    title={ISOLET},
    xlabel={$\epsilon$},
    ylabel={Accuracy},
    ymin=10, ymax=90,
    xtick={0,1,2,3},
    xticklabels={1.0,0.5,0.25,0.2},
    ytick={10,30,50,70,90},
    label style={font=\Large},
    ticklabel style={font=\Large},
    title style={font=\Large},
    legend style={at={(0.05,.4)},anchor=north west},
    ymajorgrids=true,
    grid style=dashed,
    width=7cm,height=5cm,
]

\addplot[
    color=red,
    mark=square,
    ]
    coordinates {
    (0,75)(1,62)(2,41)(3,36)
    };

\addplot[
    color=blue,
    mark=square,
    ]
    coordinates {
    (0,78.7)(1,61.0)(2,25.6)(3,17.5)
    };

\addplot[
    color=green,
    ]
    coordinates {
    (0,96)(1,96)(2,96)(3,96)
    };

\end{axis}
\end{tikzpicture}}
\caption{}
    \end{subfigure}%
        \caption{DPLTM ticket transfer. Figures (a) and (b) show the transfer across similar architectures and Figures (c) and (d) show the transfer across different datasets and domains. We observe that as the privacy budget decreases, our method (DPLTM, red line), significantly outperforms DPSGD (blue line).}\label{fig:trans_diff}
\end{figure}

\subsubsection{Transfer Across Different Architectures}
Now we investigate an interesting case, where the publicly available dataset is ``quite'' different from our sensitive data. Not only in the terms of the domain but also the dimensionality and outcome. To test this scenario, we use the same, Kuzushiji-MNIST dataset to create a winning ticket,  and we use the winning ticket to train differentially private models for HAR and ISOLET datasets. As the dimensionality and the output are different between Kuzushiji-MNIST, HAR, and ISOLET, we add an extra layer to the network during ticket generation, which acts as a projection layer to match the dimensionality of HAR or ISOLET. Then during the differentially private training, we remove the top-most layer (projection layer) and the output layer from the selected ticket.

Figure \ref{fig:trans_diff} (c,d) shows the results of this evaluation. We observe a similar trend as before, where we observe similar performance between DPLTM and DPSGD when privacy is ``loose'', but DPLTM significantly outperforms DPSGD as we decrease $\epsilon$ (tight privacy). The performance, however, is worse than in the case of transfer across similar datasets. Which is intuitive as we loose some ``signal'' by crossing the domain and dropping layers from the winning ticket. We leave further exploration of this idea for future studies.

\subsection{Investigating the Score Function}
We have discussed at-length the properties of our new proposed score function and have argued that it selects ``good'' tickets. We use this section to provide empirical evidence for the claim, that is, to answer the question, ``What is the impact of using our score function for selecting a winning ticket?''. That is, is our score function performing better than a randomly selected ticket. For evaluation, we use the tightest reported privacy budget ($\epsilon=0.2$) as we would expect our score function to perform worse at this setting, and we compare the average accuracy achieved by our proposed method using the ``winning tickets'' compared to a randomly selected ticket. 

Table \ref{tab:supp_res} shows the results. We observe that using the winning ticket selected via our custom score function has significant advantage over the use of a randomly sampled ticket, with our winning ticket significantly outperforming the randomly sampled ticket on all datasets. This enforces our claim that using a ``winning'' ticket with fewer model parameters and high performance, we can achieve good privacy-utility trade-off for differentially private neural networks, and that our custom score function does a good job in selecting such a ticket.

\begin{table}[h]
\centering
\begin{tabular}{lll}
\hline
Dataset         & Winning & Random \\ \hline
MNIST           & \textbf{0.84}        & 0.79           \\
HAR             & \textbf{0.61}        & 0.56           \\
Fashion-MNIST   & \textbf{0.77}        & 0.73            \\
Kuzushiji-MNIST & \textbf{0.60}        & 0.54            \\
ISOLET          & \textbf{0.58}        & 0.33           
\end{tabular}
\caption{Comparing accuracy of our winning ticket and a randomly sampled ticket. We observe that using the winning ticket selected via our custom score function has significant advantage over the use of a randomly sampled ticket.}\label{tab:supp_res}
\end{table}

\section{Related Work}
Our related work mainly falls into two categories. First is the prior
work related to the lottery ticket mechanism, and second is related to the differential privacy in neural networks. 

The lottery ticket hypothesis was introduced in \cite{frankle2018lottery}, and provides evidence that there exist subnetworks within a large network, which when trained in isolation, can perform at-par with the large network. This work was further extended in \cite{frankle2019lottery}, where the initial idea was improved to work on larger, deeper networks. Lottery ticket mechanism since has been further explored, it has been shown that the winning tickets can be used across datasets \citep{morcos2019one}, and that the tickets occur in other domains as well, such as in NLP \citep{yu2019playing}.

Perturbing the learning process to provide differential privacy has been studied in various contexts \citep{rajkumar2012differentially,song2013stochastic,abadi2016deep,shokri2015privacy}, where gradients are perturbed during the gradient descent, so the resulting weight updates, and hence the model itself is differentially private. Differentially private stochastic gradient descent (DPSGD), proposed by Abadi et al. \citep{abadi2016deep}, is the most popular and most often used method for differentially private training for a wide variety of neural networks \citep{xie2018differentially,beaulieu2017privacy,mcmahan2017learning}. DPSGD, however, falls short on the utility front, as the noise required for preserving privacy in DPSGD scales up proportional to the model size, discussed at length in the Introduction.

\section{Conclusion}
We have proposed DPLTM, an end-to-end differentially private version of the lottery ticket mechanism. Using our custom score function to select differentially private winning tickets, we have shown that DPLTM significantly outperforms DPSGD on a variety of datasets, tasks, and privacy budgets. We have shown that DPLTM converges faster compared to DPSGD, leading to reduced privacy budget consumption with improved utility if early stopping is desired. We have further shown that tickets in DPLTM are transferable across datasets, architectures, and domains. For our future work, we would like to focus on the detailed study of the mechanism when used for differentially private ``transfer-learning'' and to further improve the utility guarantees, with the extensions to various ``other'' models such as the Generative Adversarial Networks \citep{goodfellow2014generative}.

\section{Acknowledgements}
Lovedeep Gondara is supported by an NSERC (Natural Sciences and Engineering Research Council of Canada) CGS D award.

\bibliographystyle{ieeetr}
\bibliography{nips}
\end{document}